\documentclass{article}

\usepackage{microtype}
\usepackage{graphicx}
\usepackage{subcaption}
\usepackage{booktabs} 
\usepackage{ulem} 

\usepackage{hyperref}
\usepackage{subcaption}

 \usepackage{xcolor} 

\usepackage[preprint]{icml2026}

\usepackage{amsmath}
\usepackage{amssymb}
\usepackage{mathtools}
\usepackage{amsthm}
\usepackage{ulem}
\usepackage{comment}

\usepackage[capitalize,noabbrev]{cleveref}

\theoremstyle{plain}
\newtheorem{theorem}{Theorem}[section]

\newtheorem{lemma}[theorem]{Lemma}

\theoremstyle{definition}

\theoremstyle{remark}

\usepackage[textsize=tiny]{todonotes}

\icmltitlerunning{Reliability-Aware ProbDPP for Robust Data Selection}

\begin{document}

\twocolumn[
  \icmltitle{Reliability-Aware Determinantal Point Processes for Robust Informative Data Selection in Large Language Models}



  \icmlsetsymbol{equal}{*}

\begin{icmlauthorlist}
  \icmlauthor{Ahmad Sarlak}{clemson}
  \icmlauthor{Abolfazl Razi}{other}
\end{icmlauthorlist}

\icmlaffiliation{clemson}{Clemson University, Clemson, SC, USA}
\icmlaffiliation{other}{School of Computing, Clemson University, Clemson, SC, USA}

\icmlcorrespondingauthor{Ahmad Sarlak}{asarlak@g.clemson.edu}

  \icmlkeywords{Machine Learning, ICML}

  \vskip 0.3in
]



\printAffiliationsAndNotice{}  

\begin{abstract}


Informative data selection is a key requirement for large language models (LLMs) to minimize the amount of data required for fine-tuning, network distillation, and token pruning, enabling fast and efficient deployment, especially under computational and communication constraints.
Traditional subset selection methods, including those based on Determinantal Point Processes (DPP), focus on maximizing diversity but assume that selected data batches are always available error-free. This presumption prohibits their use under partial storage outage, imperfect communication, and stochastic access failures. Furthermore, we show that the original formulation collapses under such conditions.
To address this gap, we introduce ProbDPP, a novel reliability-aware implementation of k-DPP that accounts for probabilistic data access by recasting the objective function with a regularization term that remains well-posed and decomposes into a geometric diversity term and unreliability cost. The resulting objective facilitates robust selection of diverse data batches under uncertainty. Furthermore, we frame this reliability-aware diversity maximization as a combinatorial semi-bandit problem and propose a UCB-style algorithm to efficiently learn the unknown reliability online. Theoretical analysis provides regret bounds for the proposed approach, ensuring performance guarantees.
  
\end{abstract}

\section{Introduction}
LLMs are increasingly deployed across a wide range of applications. They rely on large amounts of data at training and fine-tuning phase (curated data for fine-tuning or distillation). Recent work highlights that careful data selection and curation, including high-quality instruction data mining and efficient subset selection for fine-tuning, can dramatically improve LLM performance and efficiency, often matching or surpassing full-data baselines while using only a fraction of the original dataset. \cite{wangnice, jiang2025importance, zhang2025survey}. 

Also, the speedy inference of LLMs is severely constrained by the input prompt. 
In many systems, prompts are constructed by aggregating information from heterogeneous upstream sources, including retrieval systems, databases, tools, documents, and other models \cite{sun2025docs2kg}. Upstream components often produce a large pool of candidate units (retrieved passages, tool outputs, or model-generated chunks), which cannot all be included due to strict context-window and compute/storage budgets, underscoring the significance of selective data use. One notable example is autonomous driving with cooperative perception (CAP), where sensors, perception modules, and auxiliary models generate multiple candidate signals or summaries \cite{chiu2025v2v, wang2024reasoning, zhang2024efficient}. Selecting an informative and robust subset of these candidates for real-time operation of multi-modal LLMs remains a central systems challenge. 

This selection is often performed in a black-box regime, where the LLM’s internal parameters and activations are inaccessible and only input-side pre-processing is permitted \cite{nagle2024fundamental, li2023compressing, jiang2023llmlingua, jiang2024longllmlingua}, making principled subset selection a central systems bottleneck \cite{rahman2026dtc}.
Another key challenge in informative unit selection -- both for inference-time prompt construction and for training-time data selection, such as fine-tuning or distillation -- is that candidate sources are rarely reliable \cite{sarlak2023diversity}. Candidate units may be corrupted or dropped due to communication errors (e.g., in cooperative or edge-based systems), lost due to storage or caching failures, or degraded by the imperfect operation of automated tools, including retrieval systems, generative models, or data augmentation pipelines. As a result, extracted passages may be off-topic, tool outputs may fail or be malformed, and upstream model generations may be noisy or misleading due to hallucinations or domain shift \cite{alinejad2024evaluating, cho2024typos, han2024rag}. Therefore, selected data units may fail to make meaningful contributions to the final prompt or training process \cite{sarlak2025extended}. Under a strict budget, such stochastic “dropouts” can be catastrophic: a method may allocate capacity to unreliable units while excluding others that are both informative and reliable \cite{ye2025fit, huang2025prunevid}.


Diversity-driven objectives offer a natural way to mitigate redundancy on a tight budget \cite{chen2025rd, lamont20243d}. In particular, log-determinant (log-det) diversity, a fundamental concept used in DPP-based methods, encourages selecting embedding vectors that span a large volume, providing broad semantic coverage while discouraging near-duplicates \cite{hough2006determinantal, taskar2013determinantal}. Log-det/DPP objectives have therefore become a commonplace solution for subset selection and have influenced token- and sentence-level prompt compression \cite{wang2024effective}. However, the classical log-det formulation assumes that selected units are deterministically present and usable. This assumption breaks in multi-source prompting pipelines, where each selected unit may be irrelevant, corrupted, or missing with nonzero probability \cite{kulesza2010structured}.

This paper shows that the most direct way of incorporating unreliability is ill-posed. 
Specifically, we model stochastic source unreliability using a binary masking vector, where each candidate unit is independently dropped with an unknown reliability probability. 
Under this model, the natural objective is to maximize the expected log-determinant of the selected kernel, which captures diversity while accounting for random dropouts. However, directly optimizing this expectation leads to an ill-posed objective.
When unit availability is modeled by independent Bernoulli variables, the expected log-determinant diverges to $-\infty$ if
any selected unit has a nonzero failure probability. In other words, naive “expected log-det under dropouts” collapses and cannot serve as a meaningful objective for reliability-aware subset selection. This exposes a fundamental gap between the widely used diversity-based selection principles \cite{neumann2021diversifying, jha2024characterizing} and the stochastic reality of modern prompting pipelines \cite{cuconasu2024power}.

To address this issue, we propose \textbf{ProbDPP}, by adding a regularization term to the randomly masked kernel that restores well-posedness and yields a tractable, reliability-aware selection objective. The resulting expected diversity is finite and admits an exact decomposition into (i) a geometric log-determinant term that captures embedding-space diversity and (ii) an additive per-unit reliability term that explicitly favors dependable sources. This decomposition preserves the combinatorial structure that makes log-determinant objectives attractive, while explicitly incorporating stochastic availability. This enables diverse data selection under probabilistic data access with known success probabilities for each data unit. 

Furthermore, as a practical implementation of our reliability-aware objective when source reliabilities are unknown and must be learned online, we cast subset selection as a combinatorial semi-bandit problem with semi-bandit feedback, where per-unit success or failure is inferred gradually from the received data \cite{kveton2015tight, wen2015efficient}. Within this framework, we develop an efficient UCB-style algorithm \cite{garivier2011kl} to gradually select data units that balance diversity and reliability. We provide theoretical guarantees in the form of regret upper bounds, together with a complementary lower bound that characterizes the intrinsic difficulty of the problem \cite{lattimore2020bandit}.

\paragraph{Contributions.}
Our contributions are summarized below.
\begin{enumerate}
    \item We prove an ill-posedness result by showing that the naive expected log-det diversity under Bernoulli dropouts diverges to $-\infty$ whenever at least one selected data unit is subject to random dropout with nonzero probability.
    
    \item We develop a probabilistic formulation of log-determinant subset selection for stochastic availability scenarios, and propose a principled regularization that restores well-posedness, yielding a finite expected objective with an exact separation between embedding-space diversity and source reliability, for probabilistic data access with known success probabilities.
    
    \item We formulate reliability-aware diversity maximization as a combinatorial semi-bandit for probabilistic dropouts with unknown, learnable reliabilities and propose an efficient UCB-style algorithm for the online setting.
    
    \item We establish regret upper bounds and a corresponding information-theoretic lower bound, characterizing the minimal number of observations required to jointly learn unit reliabilities and perform diversity-aware subset selection under stochastic availability.
\end{enumerate}

\section{Related Work}

Selecting a small and useful subset of context items under a strict budget is a central problem in both in-context learning (ICL) and retrieval-augmented generation (RAG). Simply selecting the top-ranked passages or demonstrations often may lead to redundant content with minimal synergistic gains. 
To mitigate these issues, several recent methods use diversity-aware subset selection, including submodular objectives and determinantal point processes (DPPs), to balance relevance, coverage, and redundancy. For instance, LM-DPP casts demonstration selection as a selective annotation problem in ICL and uses a language-model-based DPP to favor examples that are both diverse and low in uncertainty \cite{wang2024lmdpp}. Similarly, SMART-RAG applies a DPP-style objective to select retrieved passages that are relevant, diverse, and non-conflicting, improving unsupervised context selection \cite{li2024smartrag}. While these approaches show that log-det/DPP objectives are effective for budgeted context selection, they assume that selected context items are always available and usable. 
We show that the original log-det/DPP formulation falls apart for being ill-posed under probabilistic data availability (e.g., data drop due to storage or communication issues in LLM-powered cooperative driving scenarios). This underscores the need for a reliability-aware reformulation that remains tractable, representing geometric benefits of DPP-based selection for stochastic data access. 

A closely related line of work studies prompt compression as an input-level optimization for black-box LLMs, focusing on token-level compressors that reduce inference cost and free up context budget without accessing model internals \cite{jiang2023llmlingua}. Recent work further formalizes prompt compression through a rate--distortion framework, characterizing the optimal trade-off between retained information and prompt length via a linear program and its dual \cite{jiang2024longllmlingua}. This distortion rate curve provides a theoretical baseline for assessing pruning and compression methods, revealing a significant performance gap between existing approaches and the theoretical limits, and highlighting the importance of conditioning compression on the downstream query \cite{nagle2024fundamental}. While this framework provides a principled notion of optimal compression under a budget, it models compression as a deterministic operation and does not capture stochastic unreliability in upstream units, such as corrupted or distorted chunks produced by imperfect retrieval or generative tools, packet drops over noisy communication links, or missing items due to storage/caching failures.

Our work is complementary and offers a transformative perspective on this problem. Inspired by the issues we observed when developing LLM-powered cooperative driving for autonomous vehicles, we focus on reliability-aware subset selection under an explicit stochastic failure model by developing a regularized objective with an online semi-bandit learning approach for unknown reliabilities, noting that naive expected log-det collapses under Bernoulli dropouts. 

The core of our system is based on DPP, which arises as a standard probabilistic model for diversity-aware subset selection: given a positive semidefinite kernel, they favor subsets with large determinants, which corresponds to selecting feature vectors that span a large volume in embedding space.
This log-determinant objective favors diverse, non-redundant sets and admits efficient sampling and approximate MAP inference (e.g., $k$-DPP and greedy methods), motivating widespread use in retrieval, summarization, and experimental design \cite{kulesza2011learning, kulesza2011k, kulesza2010structured, taskar2013determinantal}. 
Unlike classical DPP settings that assume deterministic item availability, our formulation accommodates stochastic dropouts.

Robustness to uncertainty, failures, and deletions has been studied extensively in the submodular maximization literature. Robust submodular maximization typically optimizes performance under perturbations or worst-case uncertainty sets (e.g., maximizing the minimum value across multiple objective functions), with applications such as sensor placement and bi-criteria guarantees under structured constraints \cite{anari2019structured}. Deletion-robust submodular maximization similarly studies worst-case removal of selected elements after selection, motivated by dynamic streams and the ``right to be forgotten'' \cite{mirzasoleiman2017deletion}. Related robustness ideas also appear in sensor placement and experimental design under node failures or model uncertainty \cite{krause2008near}. 
These lines of work differ from our setting in two ways. First, they predominantly model worst-case deletions or bounded perturbations, whereas we consider stochastic independent per-unit failures that capture both missing items (e.g., communication/storage dropouts) and effectively useless items (e.g., corrupted or distorted chunks produced by imperfect retrieval or generative tools). Second, our objective involves the expected log-determinant under Bernoulli masking, for which singularity events occur with nonzero probability; consequently, the naive expected log-det can diverge to $-\infty$, a pathology not addressed by worst-case deletion models. Our contribution is therefore complementary: we identify this failure mode and introduce a minimal regularization that restores well-posedness and yields a tractable decomposition suitable for both offline optimization and online learning.

Our online optimization problem fits the stochastic combinatorial semi-bandit framework: at each round, the learner selects a subset of items and observes per-item stochastic outcomes (semi-bandit feedback). UCB-style optimism is a standard approach in this setting \cite{sankararaman2016semi}. 
For linear reward models with independent item weights, algorithms such as CombUCB1 achieve near-optimal regret bounds when an offline optimization oracle is available \cite{kveton2015tight}. For Bernoulli or bounded rewards, KL-UCB-style confidence bounds are known to be optimal, matching Lai-Robbins lower bounds in the Bernoulli case \cite{garivier2011kl}. Our problem differs from these standard settings because the per-round reward decomposes into a known nonlinear log-determinant diversity term and an unknown additive reliability-dependent term learned from Bernoulli outcomes. This results in a structured combinatorial semi-bandit problem that combines semi-bandit learning with a DPP/log-det optimization oracle.

Lastly, it is noteworthy that client or participant selection is studied in different contexts. For instance, it constitutes an important component of federated and edge learning systems, where communication constraints limit the number of participants, and non-IID data across clients can slow down the convergence. Prior work has proposed diversity-aware client selection methods to improve statistical coverage, including DPP-based approaches such as federated learning with DPP-based participant selection, which use k-DPP sampling over profiled client data to speed up convergence and reduce communication overhead \cite{zhang2023dpp, bastola2024fedmil}. While these methods demonstrate the value of DPPs for promoting statistical diversity in federated learning, they typically assume that client availability and participation quality are priorly known or fixed. In contrast, our work focuses on diversity-aware selection under stochastic availability and integrates log-det/DPP selection with semi-bandit learning to estimate unknown and even time-varying reliabilities online, addressing a setting that is largely orthogonal to existing federated client selection methods.

\section{Problem Formulation}
We consider $N$ candidate sources (clients) indexed by $[N]=\{1,\dots,N\}$. 
At each round $t=1,2,\dots,T$, the server selects a fixed-size subset
\begin{align}
S_t \subseteq [N], \qquad |S_t| = K. 
\end{align}
Due to imperfect communication and stochastic unavailability, each selected source $i\in S_t$ independently succeeds with unknown probability $\alpha_i\in[0,1]$. 
Let $z_{i,t}\sim\mathrm{Bernoulli}(\alpha_i)$ denote the success indicator; the learner observes $\{z_{i,t}: i\in S_t\}$ only after attempting to fetch data (semi-bandit feedback).

When source $i$ succeeds, the server obtains an update from which it extracts a unit-norm feature embedding $f_i\in\mathbb{R}^d$ with $\|f_i\|_2=1$ (equivalently, the received feature at time $t$ is $z_{i,t}f_i$). 
Similarity across received data embeddings is modeled by the positive semidefinite Gram matrix
\begin{align}
G\in\mathbb{R}^{N\times N}, \qquad G_{ij}=\langle f_i,f_j\rangle, \qquad G\succeq 0. 
\end{align}
A standard diversity score for a selected subset $L\subseteq[N]$ is
\begin{align}
g(L)\;\triangleq\;\log\det(G_{L,L}). 
\end{align}

\paragraph{Why the naïve dropout-aware log-det collapses.}
Under stochastic availability, the actually received kernel for a chosen subset $L$ is masked by the dropout indicators. 
Let $z_t=(z_{1,t},\dots,z_{N,t})$ be the binary mask vector and define the diagonal mask
\begin{align}
M_L(z_t)\;=\;\mathrm{diag}(z_{i,t}: i\in L). 
\end{align}
The unregularized received kernel is
\begin{align}
\Sigma_{L,L}(z_t)\;=\;M_L(z_t)\,G_{L,L}\,M_L(z_t). 
\end{align}
Whenever any selected item drops ($z_{i,t}=0$), $\Sigma_{L,L}(z_t)$ becomes singular and $\log\det(\Sigma_{L,L}(z_t))=-\infty$.

\begin{lemma}[Ill-posedness under dropouts]
\label{lem:illposed}
Fix any subset $L\subseteq[N]$ such that there exists $i\in L$ with $\alpha_i<1$. 
Let $z_i\sim\mathrm{Bernoulli}(\alpha_i)$ be independent and define $\Sigma_{L,L}(z)=M_L(z)\,G_{L,L}\,M_L(z)$ with $M_L(z)=\mathrm{diag}(z_i:i\in L)$. 
Then
\begin{align}
    \mathbb{E}_z\big[\log\det(\Sigma_{L,L}(z))\big] = -\infty.
\end{align}

\end{lemma}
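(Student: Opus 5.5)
The plan is to exhibit a single mask outcome of positive probability on which the integrand equals $-\infty$. Because $\log\det$ of a positive semidefinite matrix always lies in $[-\infty,\infty)$, that one outcome will force the whole expectation to $-\infty$. The mask $z$ takes only finitely many values in $\{0,1\}^{|L|}$, so $\mathbb{E}_z[\cdot]$ is a finite sum $\sum_{z}\Pr(z)\log\det(\Sigma_{L,L}(z))$. We adopt the standard convention $\log 0=-\infty$ and $p\cdot(-\infty)=-\infty$ for $p>0$.

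\textbf{Step 1: a bad event with positive probability.} Pick $i\in L$ with $\alpha_i<1$. Then the event $E=\{z_i=0\}$ has probability $1-\alpha_i>0$. By independence, each full mask $z$ with $z_i=0$ has probability $(1-\alpha_i)\prod_{j\in L\setminus\{i\}}\alpha_j^{z_j}(1-\alpha_j)^{1-z_j}$. At least one such mask has positive probability: for each $j$, choose $z_j$ to be whichever value in $\{0,1\}$ has nonzero probability, which is possible since $\alpha_j$ and $1-\alpha_j$ cannot both vanish. Alternatively, one can simply argue at the level of the event $E$ without fixing the other coordinates.

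\textbf{Step 2: the integrand is singular on $E$.} On $E$, the diagonal matrix $M_L(z)$ has a zero in position $i$. Hence $\det M_L(z)=0$, and so $\det\Sigma_{L,L}(z)=\det(M_L(z))^2\det G_{L,L}=0$. Equivalently, the $i$-th row and column of $\Sigma_{L,L}(z)$ vanish. Therefore $\log\det\Sigma_{L,L}(z)=-\infty$ on $E$.

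\textbf{Step 3: bound the remaining terms from above.} Off $E$, the matrix $\Sigma_{L,L}(z)$ is positive semidefinite with unit-norm generating vectors. By Hadamard's inequality, $\det\Sigma_{L,L}(z)\le\prod_{j}\Sigma_{jj}\le 1$, so $\log\det\le 0<\infty$. There is therefore no $+\infty$ contribution that could create an indeterminate $\infty-\infty$ form. Combining this with Step 2, we get $\mathbb{E}_z[\log\det\Sigma_{L,L}(z)]\le \Pr(E)\cdot(-\infty)+0=-\infty$.

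The only delicate point is Step 3's measure-theoretic bookkeeping: the expectation of an extended-real random variable is well-defined once its positive part is integrable, and that integrability is exactly what the Hadamard bound supplies. The rest of the argument is immediate.
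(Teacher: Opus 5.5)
Your proposal is correct and follows the same route as the paper: you exhibit the positive-probability event $\{z_i=0\}$, on which $M_L(z)$, and hence $\Sigma_{L,L}(z)$, is singular, and this $-\infty$ value, carrying weight $1-\alpha_i>0$, forces the expectation to $-\infty$. Your Step 3 adds a check the paper leaves implicit, namely that no $+\infty$ term can appear. That check is right, but Hadamard's inequality is more than you need: $z$ takes finitely many values and each $\det\Sigma_{L,L}(z)$ is a finite nonnegative real, so every term already lies in $[-\infty,\infty)$.
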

\noindent\emph{Proof.} Deferred to \ref{app:illposed}. 

\subsection{Reliability-aware ProbDPP Objective}
To obtain a finite and meaningful objective under dropouts, we adopt a regularized ProbDPP formulation.
For $\varepsilon>0$, we define the regularized masked kernel as
\begin{align}
&W_L(z_t) \;=\; M_L(z_t)+\varepsilon I, \\
&\Sigma^{(\varepsilon)}_{L,L}(z_t) \;=\; W_L(z_t)\,G_{L,L}\,W_L(z_t). 
\end{align}
The resulting regularized expected diversity is
\begin{align}
D_\varepsilon(L)\;\triangleq\;\mathbb{E}_{z_t}\!\left[\log\det\!\Big(\Sigma^{(\varepsilon)}_{L,L}(z_t)\Big)\right]. 
\end{align}

\begin{lemma}[Exact decomposition of the regularized objective]
\label{lem:decomp}
Let $\varepsilon>0$ and $\Sigma^{(\varepsilon)}_{L,L}(z)=W_L(z)\,G_{L,L}\,W_L(z)$ with $W_L(z)=M_L(z)+\varepsilon I$. 
Then
\begin{align}
D_\varepsilon(L) \;=\; \log\det(G_{L,L}) + \sum_{i\in L} r_i(\alpha_i,\varepsilon), 
\end{align}
where the per-item reliability term is
\begin{align} \label{eq:ri}
r_i(\alpha,\varepsilon) \;=\; 2\Big[\alpha\log(1+\varepsilon) + (1-\alpha)\log\varepsilon\Big]. 
\end{align}
\end{lemma}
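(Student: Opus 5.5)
The plan is to compute the determinant pointwise for each fixed realization $z$, and only then take expectations. For fixed $z$, $W_L(z)=M_L(z)+\varepsilon I$ is a diagonal matrix with entries $w_i=z_i+\varepsilon$. Since $z_i\in\{0,1\}$, each entry is either $1+\varepsilon$ or $\varepsilon$, and in both cases it is strictly positive because $\varepsilon>0$. Multiplicativity of the determinant for square matrices then gives
\begin{align}
\det\big(\Sigma^{(\varepsilon)}_{L,L}(z)\big)=\det(W_L(z))^2\det(G_{L,L})=\det(G_{L,L})\prod_{i\in L}(z_i+\varepsilon)^2 .
\end{align}
Taking logarithms yields the pointwise identity
\begin{align}
\log\det\big(\Sigma^{(\varepsilon)}_{L,L}(z)\big)=\log\det(G_{L,L})+2\sum_{i\in L}\log(z_i+\varepsilon).
\end{align}

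The key subtlety is that this identity holds as an equality in the extended reals. If $G_{L,L}$ is singular, both sides equal $-\infty$ for every $z$, and the claimed decomposition holds trivially with $D_\varepsilon(L)=-\infty$. I will state this convention explicitly. Otherwise $\det(G_{L,L})>0$ (as $G\succeq0$) and every term is finite.

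Next I take expectations over $z$. The first term is deterministic. The sum has only finitely many terms, and each term $\log(z_i+\varepsilon)$ is a bounded random variable taking only the two values $\log(1+\varepsilon)$ and $\log\varepsilon$. So linearity of expectation applies with no integrability issue, and independence of the $z_i$ is not even needed. Each term gives
\begin{align}
\mathbb{E}[2\log(z_i+\varepsilon)]=2\big[\alpha_i\log(1+\varepsilon)+(1-\alpha_i)\log\varepsilon\big]=r_i(\alpha_i,\varepsilon),
\end{align}
which is exactly the per-item reliability term in \eqref{eq:ri}. Summing over $i\in L$ completes the proof.

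I do not expect a real obstacle. The only point needing care is the contrast with Lemma~\ref{lem:illposed}: there a factor $z_i=0$ makes the determinant vanish with positive probability, whereas here the regularization keeps each diagonal entry bounded below by $\varepsilon$. That is what makes the logarithm finite and the expectation well defined. I will remark on this explicitly to tie the two lemmas together.
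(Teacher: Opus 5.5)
Your proof is correct and follows essentially the same route as the paper: you establish the pointwise identity $\log\det\Sigma^{(\varepsilon)}_{L,L}(z)=\log\det G_{L,L}+2\sum_{i\in L}\log(z_i+\varepsilon)$ and then apply linearity of expectation to the two-valued terms. The differences are cosmetic. You use multiplicativity of the determinant directly, where the paper goes through a Cholesky factor $G_{L,L}=R_LR_L^\top$ and the identity $\det(AA^\top)=\det(A)^2$. You also handle singular $G_{L,L}$ with the extended-real convention, whereas the paper assumes $G_{L,L}\succ 0$ or adds a ridge.
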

\noindent\emph{Proof.} Deferred to \ref{app:decomposition}.

\paragraph{Interpretation.}
Lemma~\ref{lem:decomp} separates geometric diversity $\log\det(G_{L,L})$ from an additive reliability term $\sum_{i\in L} r_i(\alpha_i,\varepsilon)$, preventing objective collapse under stochastic availability.

\paragraph{Online objective.}
Since the reliabilities $\{\alpha_i\}_{i=1}^N$ are unknown, the learner must select $\{S_t\}_{t=1}^T$ using only semi-bandit feedback $\{z_{i,t}: i\in S_t\}$. 
The goal is to maximize cumulative reliability-aware diversity via selecting $S_t$ at time points $t=1,2,...T$:
\begin{align}
\max_{\substack{|S_t|=K\\ S_t\subseteq[N]}}
\;\sum_{t=1}^T D_\varepsilon(S_t).
\end{align}

\subsection{RA-kDPP: A Combinatorial Semi-Bandit Formulation}

The reliability-aware client selection problem naturally fits the combinatorial semi-bandit setting: at each round, the learner selects
multiple items under a cardinality constraint, and observes the successful response 
only for the selected items. In this section, we formalize the problem as a combinatorial semi-bandit with semi-bandit feedback, explicitly absorbing the unreliability parameters into the objective function. 

To this end, we model the bandit actions with a binary vector $A_t \in \{0,1\}^N$, $\qquad \|A_t\|_1 = K$ 
where $A_{t,i} = 1$ if and only if client $i$ is selected and $A_{t,i} = 0$ otherwise.

Specifically, we define the induced linear coefficient
\begin{align}
    \theta_i \;\triangleq\; r_i(\alpha_i,\varepsilon),
\end{align}
where $r_i(\cdot,\varepsilon)$ is a known reliability-to-reward mapping
parameterized by $\varepsilon$, as defined in \ref{eq:ri}.
The unknown parameter vector is thus
\begin{align}
    \theta = (\theta_1,\dots,\theta_N) \in \mathbb{R}^N 
\end{align}

At each round $t$, the learner receives a linear reward contribution
\begin{align}
    \langle A_t, \theta \rangle
=
\sum_{i=1}^N A_{t,i}\,\theta_i
=
\sum_{i=1}^N A_{t,i}\, r_i(\alpha_i,\varepsilon).
\label{Attheta}
\end{align}

In addition to this unknown linear term, each action $A_t$ is associated with a
known deterministic combinatorial component
\begin{align}
    g(A_t)
=
\log \det \bigl(G_{S_t,S_t}\bigr),
\label{gat}
\end{align}
where $G$ is a known Gram matrix and $G_{S_t,S_t}$ denotes the principal submatrix
indexed by the selected subset $S_t$.

Combining both components, the per-round objective is to maximize 
\begin{equation}
f(A_t)
=
g(A_t) + \langle A_t, \theta \rangle .
\label{eq:linear_plus_known}
\end{equation}

This formulation shows that the reliability-aware client selection problem
induced by ProbDPP can be viewed as a combinatorial semi-bandit with a
known nonlinear term and an unknown linear term.
The learner estimates the Bernoulli parameters $\alpha_i$ from semi-bandit
feedback and applies optimism to the induced coefficients $\theta_i$ when
selecting actions.

\subsection{Semi-bandit feedback and estimation}
After selecting an action $A_t \in \mathcal{A}$, the learner observes
semi-bandit feedback: Let
\begin{align}
n_i(t)
&= \sum_{s=1}^t \mathbb{I}\{A_{s,i}=1\}
\end{align}
denote the number of times coordinate $i$ has been selected up to round $t$.
The empirical mean estimator of the unknown reliability $\alpha_i$ is
\begin{align}
\hat{\alpha}_i(t)= \frac{\sum_{s=1}^t \mathbb{I}\{A_{s,i}=1\} \, z_{i,s}}{\sum_{s=1}^t \mathbb{I}\{A_{s,i}=1\} }
\qquad \text{for } n_i(t) \ge 1 .
\end{align}

This estimator coincides with the standard semi-bandit estimator for
combinatorial bandits with deterministic action selection, when success estimates $\hat{\alpha}_i$ approach their target values $\alpha_i$.
No importance weighting or unbiased correction is required, since
each observed outcome $z_{i,t}$ corresponds to an actual pull of
coordinate $i$.

\subsection{Action selection with KL-UCB optimism}

To balance exploration and exploitation, we construct optimistic upper
confidence bounds for the unknown client reliabilities using the
KL-UCB principle in \citet{lattimore2020bandit}.
Since the observed feedback variables $z_{i,t}$ are Bernoulli, an upper
confidence bound for each reliability parameter $\alpha_i$ is defined as

\begin{align}
U_i^\alpha(t) =
&\max \Bigl \{
u \in [0,1]: \\  
\nonumber
& 
n_i(t)\,
\mathrm{kl}\!\bigl(\hat P_{\alpha_i(t)} \,\|\, P_u\bigr)
&\le
\log t
 + c \log\log t
\Bigr\},
\label{eq:klucb_alpha}
\end{align}

where $P_u$ represents Bernoulli distribution with parameter $u$, 
and
$\mathrm{kl}(P\|Q)$ denotes the Kullback-Leibler divergence
between distributions $P$ and $Q$.
The constant $c>0$ controls the confidence level.

The optimistic reliability bounds are then mapped into optimistic linear
contributions through the known reliability-to-reward transformation.
Specifically, we define the optimistic coefficient
\begin{align}
    \tilde \theta_i(t)
=
r_i\!\bigl(U_i^\alpha(t), \varepsilon\bigr),
\end{align}
and collect these into the optimistic weight vector
\begin{align}
    \tilde \theta(t)
=
\bigl(\tilde \theta_1(t),\dots,\tilde \theta_N(t)\bigr).
\end{align}

At each round $t$, the learner selects the action
\begin{equation}
A_t
\in
\arg\max_{a \in \mathcal{A}}
\Bigl\{
g(a) + \langle a, \tilde \theta(t) \rangle
\Bigr\},
\label{eq:oracle_klucb}
\end{equation}
where $g(a)$ and $\langle a, \tilde \theta(t) \rangle$ is defined respectively in ~\eqref{gat} and ~\eqref{Attheta}
which corresponds to reliability-aware ProbDPP selection with
coordinate-wise KL-UCB optimism applied to the induced linear
coefficients.
The optimization problem~\eqref{eq:oracle_klucb} is solved using the same
oracle or greedy procedure as in the non-bandit setting. The summary of this process is presented in \textbf{Algorithm 1}.

\begin{algorithm}[tb]
  \caption{ProbDPP}
  \label{alg:probDPP_klucb}
  \begin{algorithmic}
    \STATE {\bfseries Input:} horizon $T$
    \STATE {\bfseries Action set:} $\mathcal{A}=\{a\in\{0,1\}^N:\|a\|_1=K\}$
    \STATE Initialize $n_i(0)=0$, $\hat{\alpha}_i(0)=0$ for all $i\in\{1,\dots,N\}$
    \FOR{$t=1$ {\bfseries to} $T$}
      \FOR{$i=1$ {\bfseries to} $N$}
        \IF{$n_i(t-1)\ge 1$}
          \STATE Compute $U_i^\alpha(t)$:
          \STATE \hspace{1em}$
          \begin{aligned}
          U_i^\alpha(t) = \max_{u\in[0,1]}\; & u \\
          \text{s.t.}\quad
          n_i(t\!-\!1)\,\mathrm{kl}\!\bigl(\hat{\alpha}_i(t\!-\!1)\,\|\,u\bigr)
          &\le \log t \\
          &\quad + c\log\log t
          \end{aligned}
          $
        \ELSE
          \STATE Set $U_i^\alpha(t)=1$
        \ENDIF
        \STATE Set $\tilde{r}_i(t)=r_i\!\bigl(U_i^\alpha(t),\varepsilon\bigr)$
      \ENDFOR

      \STATE Select $A_t$:
      \STATE \hspace{1em}$
      \begin{aligned}
      A_t \in \arg\max_{a\in\mathcal{A}}\; \{\, & g(a) \\
      &+ \langle a,\tilde{r}(t)\rangle \,\}
      \end{aligned}
      $
      \STATE Observe $\{z_{i,t}: A_{t,i}=1\}$
      \FOR{each $i$ with $A_{t,i}=1$}
        \STATE Update $n_i(t)$ and $\hat{\alpha}_i(t)$
      \ENDFOR
    \ENDFOR
  \end{algorithmic}
\end{algorithm}

\section{Performance Analysis}

In this section, we establish regret guarantees for the proposed online learning framework.
All proofs are deferred to the appendix unless stated otherwise.

\subsection{Benchmark and Regret}

We define the best fixed action in hindsight as
\begin{equation}
A^\star \in \arg\max_{a\in\mathcal{A}} f(a),
\end{equation}
i.e., the size-$K$ subset maximizing the objective under the true reliabilities.
The cumulative pseudo-regret is
\begin{equation}
\mathrm{Reg}_T \triangleq \sum_{t=1}^T \big(f(A^\star)-f(A_t)\big),
\end{equation}
where $A_t$ is the action (i.e. data batches selected) by Algorithm~\ref{alg:probDPP_klucb} at time t.

To express the regret bound in item-wise terms, define for each item $i\in[N]$ the smallest strictly positive gap among suboptimal actions that include $i$:
\begin{equation}
\Delta_i \triangleq 
\min\Big\{ f(A^\star)-f(a)\;:\; a\in\mathcal{A},\ a_i=1,\ f(a)<f(A^\star)\Big\},
\end{equation}
with the convention $\Delta_i=+\infty$ if no strictly suboptimal action contains item $i$.

\begin{theorem}[Regret of ProbDPP]\label{thm:upper}
Assume semi-bandit feedback with independent Bernoulli outcomes: whenever item $i$ is selected at round $t$, we observe $z_{i,t}$ for an unknown reliability $\alpha_i$.
Let $\{A_t\}_{t=1}^T$ be produced by Algorithm~\ref{alg:probDPP_klucb}. Define
\begin{equation}
\beta_\varepsilon \triangleq 2\log\!\Big(\frac{1+\varepsilon}{\varepsilon}\Big),
\qquad
\alpha_i^+ \triangleq \min\Big\{1,\ \alpha_i + \frac{\Delta_i}{K\beta_\varepsilon}\Big\}.
\end{equation}
Then
\begin{equation}
\mathbb{E}[\mathrm{Reg}_T]
\;\le\;
\Delta_{\max}\sum_{i=1}^N
\frac{\log T + c\log\log T}
{D_{\mathrm{KL}}\!\big(P_{\alpha_i}\,\|\,P_{\alpha_i^+}\big)}
\;+\; O(1),
\label{regt}
\end{equation}
where $\Delta_{\max}\triangleq \max_{a\in\mathcal{A}}(f(A^\star)-f(a))<\infty$,
$c>0$ is the KL-UCB exploration constant, and
\begin{align}
D_{\mathrm{KL}}\!\big(P_{\alpha_i}\,\|\,P_{\alpha_i^+}\big)
=
\alpha_i \log\frac{\alpha_i}{\alpha_i^+}
+
(1-\alpha_i)\log\frac{1-\alpha_i}{1-\alpha_i^+}.
\end{align}
\end{theorem}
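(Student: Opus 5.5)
The plan is the standard optimism-based decomposition for combinatorial semi-bandits with an exact oracle (in the style of CombUCB1 and KL-UCB), adapted to the known nonlinear term $g$. The proof rests on two facts. First, $r_i(\cdot,\varepsilon)$ is affine and increasing in $\alpha$, with slope exactly $\beta_\varepsilon=2\log\frac{1+\varepsilon}{\varepsilon}>0$. Hence, for any $u\ge\alpha_i$,
\begin{align}
r_i(u,\varepsilon)-r_i(\alpha_i,\varepsilon)=\beta_\varepsilon(u-\alpha_i).
\end{align}
Second, $g$ is known exactly, so all estimation error enters only through the linear part $\langle a,\tilde\theta(t)-\theta\rangle$.

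I would define the good event $\mathcal{E}_t=\{U_i^\alpha(t)\ge\alpha_i\ \forall i\}$. On $\mathcal{E}_t$, optimism gives $\tilde\theta(t)\ge\theta$ coordinatewise. Write $\tilde f_t(a)=g(a)+\langle a,\tilde\theta(t)\rangle$. Since $A_t$ maximizes $\tilde f_t$,
\begin{align}
f(A^\star)\le\tilde f_t(A^\star)\le\tilde f_t(A_t)=f(A_t)+\beta_\varepsilon\sum_{i\in A_t}\bigl(U_i^\alpha(t)-\alpha_i\bigr).
\end{align}
So whenever $A_t$ is suboptimal with gap $\Delta(A_t)$, some $i\in A_t$ must satisfy $\beta_\varepsilon(U_i^\alpha(t)-\alpha_i)\ge\Delta(A_t)/K\ge\Delta_i/K$, by pigeonhole and the definition of $\Delta_i$. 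Equivalently, $U_i^\alpha(t)\ge\alpha_i^+$. If $\alpha_i^+=1$ is forced via the min, the same conclusion holds trivially. By the KL-UCB index definition, $U_i^\alpha(t)\ge\alpha_i^+$ implies that $n_i(t-1)\,\mathrm{kl}(\hat\alpha_i(t-1)\|\alpha_i^+)\le\log t+c\log\log t$.

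Next comes the counting step. I would bound each suboptimal round's regret crudely by $\Delta_{\max}$ and charge the round to the item $i$ identified above. The expected number of rounds charged to $i$ is then bounded by the usual KL-UCB argument (Garivier–Cappé; Lattimore–Szepesvári Thm.\ 10.6). After $n$ pulls, the event $\{n\,\mathrm{kl}(\hat\alpha_{i,n}\|\alpha_i^+)\le\log T+c\log\log T\}$ can occur only while $n\lesssim(\log T+c\log\log T)/\mathrm{kl}(\alpha_i\|\alpha_i^+)$, up to a deviation term. That deviation term is controlled by a Chernoff bound on $\hat\alpha_{i,n}$ and sums to $o(\log T)$, which I absorb into the $O(1)$ and lower-order terms. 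Here I need each charged round to actually increment $n_i$; this holds because $i\in A_t$. Summing over $i$ gives the main term of~\eqref{regt}.

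The complementary bad event $\mathcal{E}_t^c$ contributes at most $\Delta_{\max}\sum_t\Pr(\mathcal{E}_t^c)$. For $c\ge 3$, say, this is bounded using the KL-UCB self-normalized concentration inequality, $\Pr(U_i^\alpha(t)<\alpha_i)\lesssim 1/(t\log t)$-type bounds, union-bounded over the $N$ items. Unpulled items have $U=1$ and are automatically optimistic.

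I expect the main obstacle to be making these tail sums truly $O(1)$ rather than $O(\log\log T)$. The standard KL-UCB analysis gives $\sum_t\Pr(\mathcal{E}_t^c)=O(\log\log T)$ for small $c$, so I would either take $c$ large enough or treat such lower-order terms as absorbed in the stated bound. A second minor point is the degenerate cases $\alpha_i\in\{0,1\}$. If $\alpha_i=1$, then $\alpha_i^+=1$ and the KL term is zero, so the bound is vacuous for that item; I would note this as an edge case. Finiteness of $\Delta_{\max}$ requires every $G_{a,a}$ to be nonsingular, which I would state as implicit in the assumption $\Delta_{\max}<\infty$.
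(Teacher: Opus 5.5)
Your proposal is correct and is essentially the paper's proof. The shared steps are: KL-UCB optimism; the oracle inequality giving $\Delta(A_t)\le\beta_\varepsilon\sum_{i\in A_t}(U_i^\alpha(t)-\alpha_i)$; pigeonhole charging to a selected item with excess at least $\Delta_i/K$, which is equivalent to $U_i^\alpha(t)\ge\alpha_i^+$ because $r_i$ is affine; per-item KL-UCB counting; and a crude $\Delta_{\max}$ per charged round.

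If anything you are more careful than the paper, which asserts optimism ``with high probability'' and never separately bounds the rounds where optimism fails. The caveats you raise apply equally to the paper's one-line appeal to KL-UCB counting: the failure sum is $O(1)$ only for $c>3$, and a Chernoff-based count gives an $o(\log T)$ rather than $O(1)$ remainder.
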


Proof idea (see Appendix \ref{app:proof_thm53} for details).
On the KL-UCB high-probability event, the optimistic per-item rewards upper bound the true rewards.
If $A_t$ is suboptimal, the gap $f(A^\star)-f(A_t)$ can be charged to at least one selected item $i$ whose optimism exceeds $\Delta_i/K$.
Since $r_i(\alpha,\varepsilon)$ is affine in $\alpha$, this implies $U_i^\alpha(t-1)\ge \alpha_i^+$, and standard KL-UCB counting bounds the expected number of such rounds by $(\log T + c\log\log T)/\mathrm{kl}(\alpha_i,\alpha_i^+)$.
Summing over items yields \eqref{regt}.

\subsection{Lower Bound for ProbDPP}

We show that logarithmic regret is unavoidable in the reliability-aware
ProbDPP framework by reduction to a classical Bernoulli bandit.
A complete proof with sharp constants is given in Appendix \ref{app:proof_lower}.

\paragraph{Restricted instance.}
Consider the special case $K=1$ and choose the Gram matrix $G$ such that
$g(a)$ is constant for all singleton sets $S=\{i\}$.
Then $g(a)$ does not affect regret differences, and each action corresponds
to selecting a single item $i\in[N]$.
The per-round objective reduces to
\begin{align}
f(i) = r_i(\alpha_i,\varepsilon),
\end{align}
and the problem becomes an $N$-armed stochastic bandit with $z_{i,t}$ and
$\mu_i \triangleq r_i(\alpha_i,\varepsilon)$.

Let $i^\star=\arg\max_i \mu_i$ and define the reward gaps
\begin{align}
\Delta_i \triangleq \mu_{i^\star}-\mu_i > 0,
\qquad i\neq i^\star.
\end{align}
Since $r_i(\alpha,\varepsilon)$ is affine in $\alpha$, these gaps are
equivalently proportional to reliability gaps
$\alpha_{i^\star}-\alpha_i$.

\paragraph{Information-theoretic argument.}
Fix any policy $\pi$ and a suboptimal arm $i\neq i^\star$.
Construct an alternative instance $\alpha'$ that differs from $\alpha$
only in coordinate $i$, with $\alpha'_i>\alpha_{i^\star}$, so that arm $i$
is uniquely optimal.
Let $P_\alpha$ and $P_{\alpha'}$ denote the distributions over the
interaction history induced by $\pi$ under $\alpha$ and $\alpha'$.
Since the two instances differ only on arm $i$,
\begin{align}
D_{\mathrm{KL}}(P_\alpha\|P_{\alpha'})
=
\mathbb{E}_\alpha[n_i(T)]\,\mathrm{kl}(\alpha_i\|\alpha'_i),
\end{align}
where $n_i(T)$ is the number of times arm $i$ is selected up to time $T$.

\paragraph{Consequence for regret.}
Standard change-of-measure arguments imply that any policy achieving
subpolynomial regret must satisfy
\begin{align}
\mathbb{E}_\alpha[n_i(T)] \;\gtrsim\; \log T
\quad \text{for all } i\neq i^\star,
\end{align}
and therefore
\begin{align}
\mathbb{E}_\alpha[\mathrm{Reg}_T]
\;\gtrsim\;
\sum_{i\neq i^\star}\Delta_i \log T
\;\ge\;
(N-1)\,\delta_{\min}\,\log T,
\end{align}
where $\delta_{\min}\triangleq\min_{i\neq i^\star}\Delta_i$.

\paragraph{Implication for ProbDPP.}
Since this Bernoulli bandit is a special case of the reliability-aware
ProbDPP semi-bandit formulation, no algorithm can achieve $o(\log T)$
regret uniformly over instances.
Combined with Theorem~4.1, this shows that ProbDPP achieves
order-optimal regret in $T$ (up to constants and gap-dependent factors).

\section{Simulation and Experiment Result}
\paragraph{Robustness Under Stochastic Context Unavailability.}

We evaluate robustness under stochastic context unavailability in a long-context question answering setting, MeetingBank dataset \cite{hu2023meetingbank}, using a local llm "llama3". First, we compress (select) the data, then give the compressed (selected) data to the llm, where only a limited number of retrieved chunks can be included. The selected chunks are then dropped with a nonzero probability at inference time, reflecting realistic retrieval-augmented and multi-source prompting pipelines. We consider a fixed context budget (up to 30 chunks, each constituting 60–90 tokens depends on the context length). The dropouts follow an independent Bernoulli distribution with rate $\alpha \in [0,\,0.8]$, while keeping the downstream LLM and decoding parameters fixed across methods. We compare \textsc{ProbDPP} against commonly used baselines for tight budget regimes: Random selection as a lower bound, and LLMLingua2 \cite{pan2024llmlingua} / LLMLingua-small \cite{jiang2023llmlingua}, which apply state-of-the-art prompt compression but do not model reliability. Table~\ref{tab:comparison_results1} reports results on the MeetingBank dataset (120 meetings, 120 questions) under stochastic chunk unavailability. Random selection performs worst across all metrics, while LLMLingua-based compression yields moderate improvements by reducing redundancy, but remains limited due to the lack of reliability modeling. In contrast, \textsc{ProbDPP} achieves the best performance in all metrics, including Token-F1 (31.3), ROUGE-L (31.2), BERTScore (36.9), and Exact Match (19.4), with relative gains of 36\% over the strongest baseline (LLMLingua2). These gains are especially pronounced for BERTScore and Exact Match, indicating improved semantic alignment and more reliable exact answers under noisy inputs. All methods use the same context budget and are evaluated under identical $\alpha$ conditions. Therefore, the improvements are derived from \textsc{ProbDPP}’ reliability-aware diversity objective, which balances coverage with the probability that selected segments remain available at inference time.

\begin{table}[t]
  \centering
  \small
  \setlength{\tabcolsep}{4pt}
  \renewcommand{\arraystretch}{1.0}
  \begin{tabular}{c|cccc}
    \toprule
    & Llmlingua2 & Llmlingua-small & Random & ProbDPP \\
    \midrule
    Token-F1   & 28.8 & 29.5 & 23.6 & \textbf{31.3} \\
    ROUGE-L    & 28.8 & 29.4 & 23.8 & \textbf{31.2} \\
    BERTScore  & 32.5 & 30.6 & 27.0 & \textbf{36.9} \\
    EM         & 17.3 & 16.9 & 15.0 & \textbf{19.4} \\
    \bottomrule
  \end{tabular}
  \caption{Robustness under stochastic context unavailability on MeetingBank (max 30 chunks). Higher is better; best in bold.}
  \label{tab:comparison_results1}
\end{table}
\vspace{0.8em}

\begin{table}[t]
  \centering
  \small
  \setlength{\tabcolsep}{4pt}
  \renewcommand{\arraystretch}{1.05}
  \begin{tabular}{c|cccc}
    \toprule
      & ProbDPP & Only bandit & $k$-DPP & Random \\
    \midrule
    Token-F1   & \textbf{34.14} & 22.12 & 31.20 & 26.86 $\pm$ 3.58 \\
    ROUGE-L    & \textbf{34.14} & 22.47 & 31.34 & 26.87 $\pm$ 3.61 \\
    BERTScore  & \textbf{11.27} & 8.4  & 7.67  & 4.82  $\pm$ 4.32 \\
    EM         & \textbf{25.00} & 17.00 & 24.00 & 21.00 $\pm$ 3.60 \\
    \bottomrule
  \end{tabular}
  \caption{Multi-source selection under stochastic source availability on HotpotQA (distractor setting, $N=10$, $K=3$). Higher is better; best in bold.}
  \label{tab:comparison_results2}
\end{table}

\paragraph{Multi-Source Selection under Stochastic Availability.}
Additionally, we evaluate \textsc{ProbDPP} in multi-source selection settings with stochastic source availability using the HotpotQA distractor benchmark \cite{yang2018hotpotqa}, using llm "llama3" and standard QA metrics. As shown in Table~\ref{tab:comparison_results2}, \textsc{ProbDPP} consistently outperforms all baselines on token-overlap, semantic similarity, and exact-answer metrics. In the $K=3$ setting, \textsc{ProbDPP} achieves Token-F1 and ROUGE-L scores of 34.14, substantially exceeding both the reliability-only (bandit) baseline (22.12 / 22.47) and the diversity-only (k-DPP) baseline (31.20 / 31.34). The gains are substantial for semantic and factual accuracy: \textsc{ProbDPP} attains a BERTS score of 11.27 and an exact match of 25, compared to 8.4–7.67 and 17–24 for baselines, and also exhibits lower variance than random selection $K$. ~\cref{fig:one,fig:two,fig:three,fig:four} further demonstrate robustness to stochastic availability as $\alpha$ varies. Each query is associated with $N=10$ candidate sources, from which the model selects a subset of size $K=5$, while each source independently survives with probability $\alpha$, modeling realistic partial failures. The results are averaged over 10 runs and 120 scenarios. Across all metrics, \textsc{ProbDPP} achieves the best or most stable performance, with Exact Match steadily improving from $17 \pm 5.4$ at $\alpha=0.5$ to $22.5 \pm 7.2$ at $\alpha=0.8$, while reliability-only and diversity-only baselines remain flat or degrade. Similar trends hold for ROUGE and BERTScore, indicating improved semantic alignment and preservation of relevant evidence in partial failures. The sensitivity analysis in the selection budget $K \in {3,4,5,6}$ (~\cref{fig:onek,fig:twok,fig:threek,fig:fourk}) shows that \textsc{ProbDPP} consistently outperforms or matches the baselines across all budgets, with particularly strong advantages in low-budget regimes. For example, at $K=4$, \textsc{ProbDPP} achieves EM of $21 \pm 5.2$ and ROUGE-L of $25.06 \pm 6.74$, outperforming both Highest-$\alpha$ and k-DPP, which suffer from lower accuracy and higher variance. These results indicate that \textsc{ProbDPP}’ gains are not driven by larger context sizes, but by its reliability-aware diversity objective, which selects subsets that are both informative and likely to remain available. Overall, the results confirm that jointly modeling reliability and diversity yields robust and accurate multi-source selection under constrained and stochastic conditions.

\begin{figure}[t]
  \centering

  \begin{subfigure}[t]{0.48\columnwidth}
    \centering
    \includegraphics[width=\linewidth]{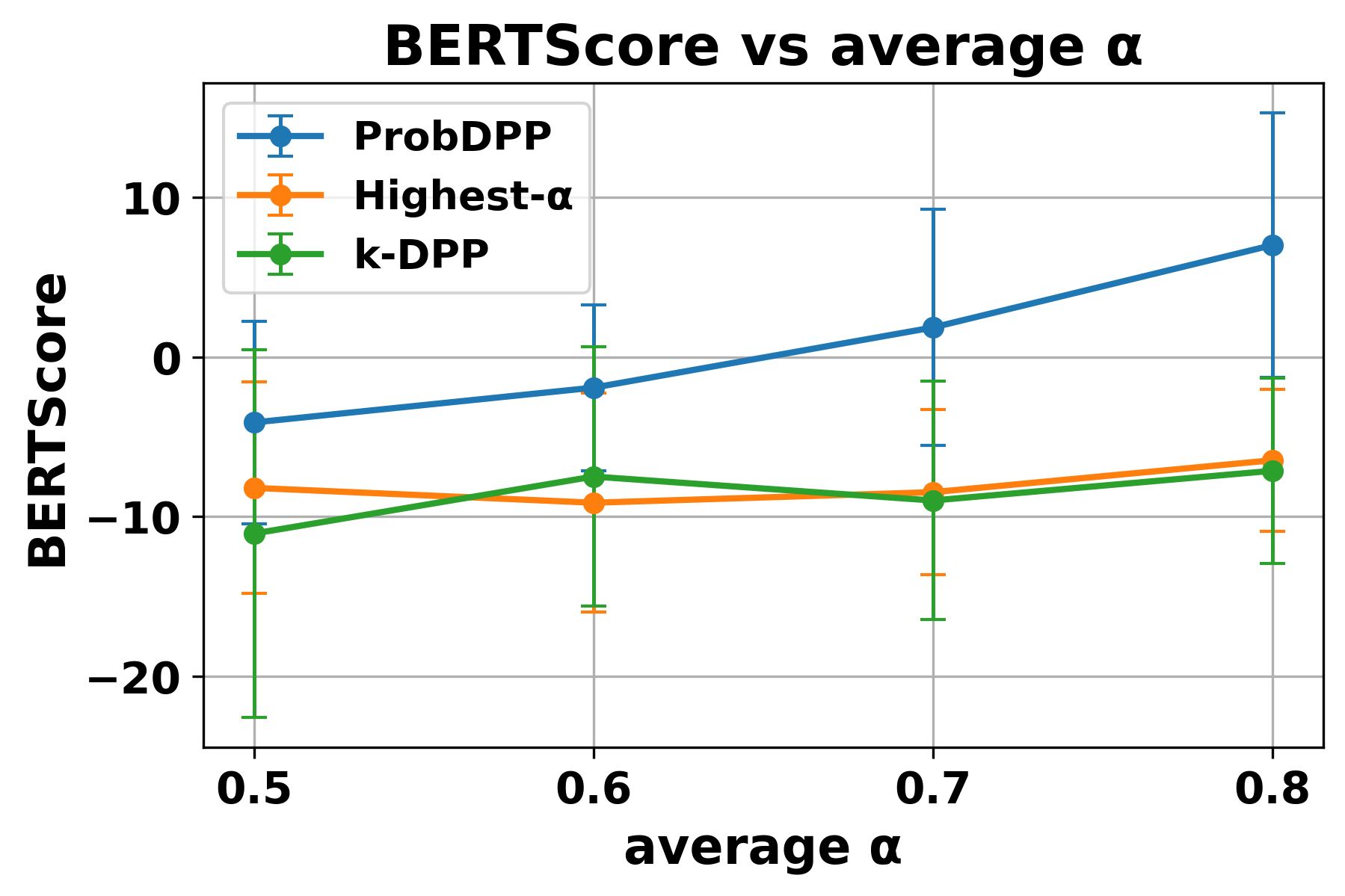}
    \caption{}
    \label{fig:one}
  \end{subfigure}\hfill
  \begin{subfigure}[t]{0.48\columnwidth}
    \centering
    \includegraphics[width=\linewidth]{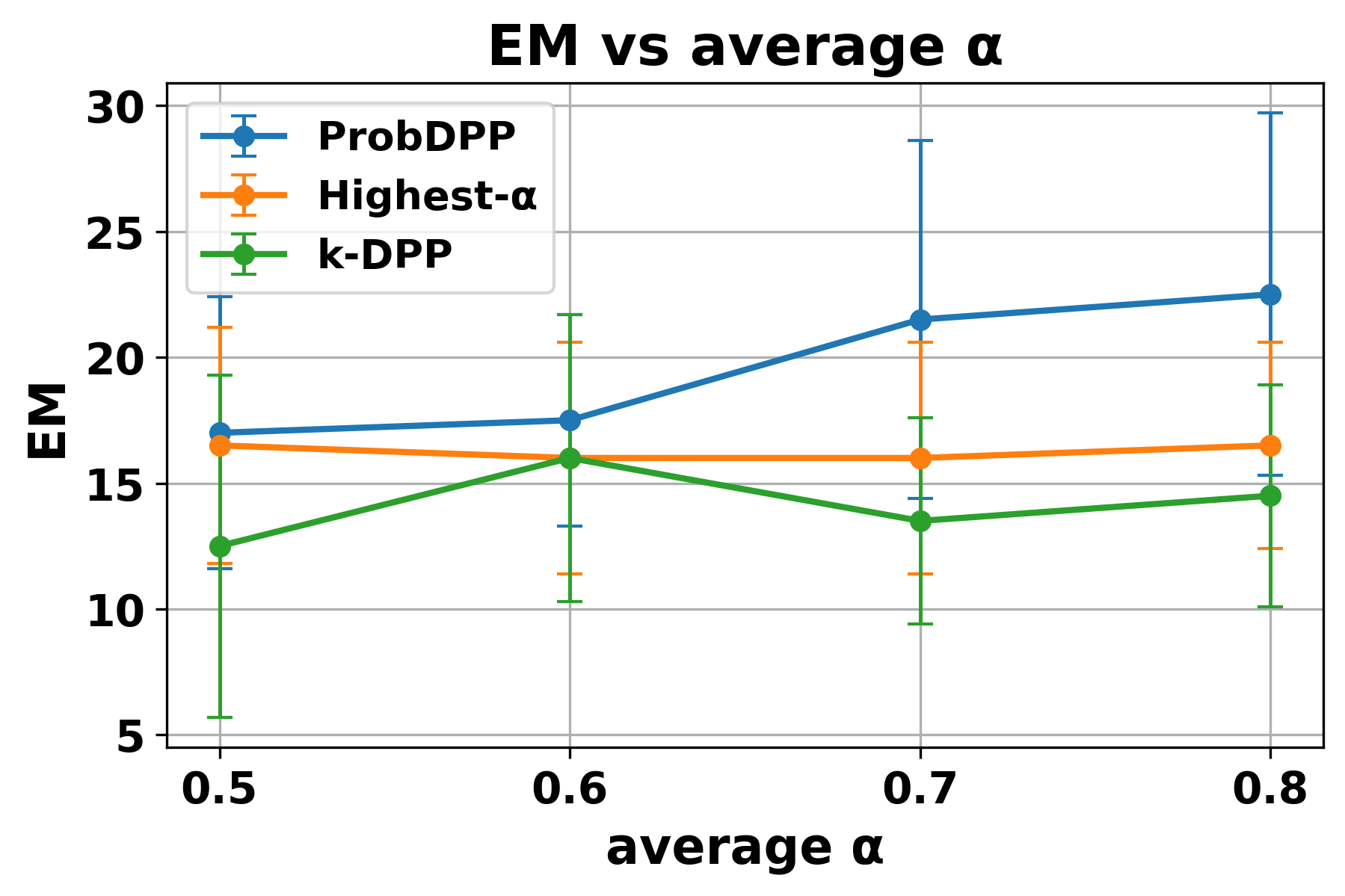}
    \caption{}
    \label{fig:two}
  \end{subfigure}

  \vspace{0.1em}

  \begin{subfigure}[t]{0.48\columnwidth}
    \centering
    \includegraphics[width=\linewidth]{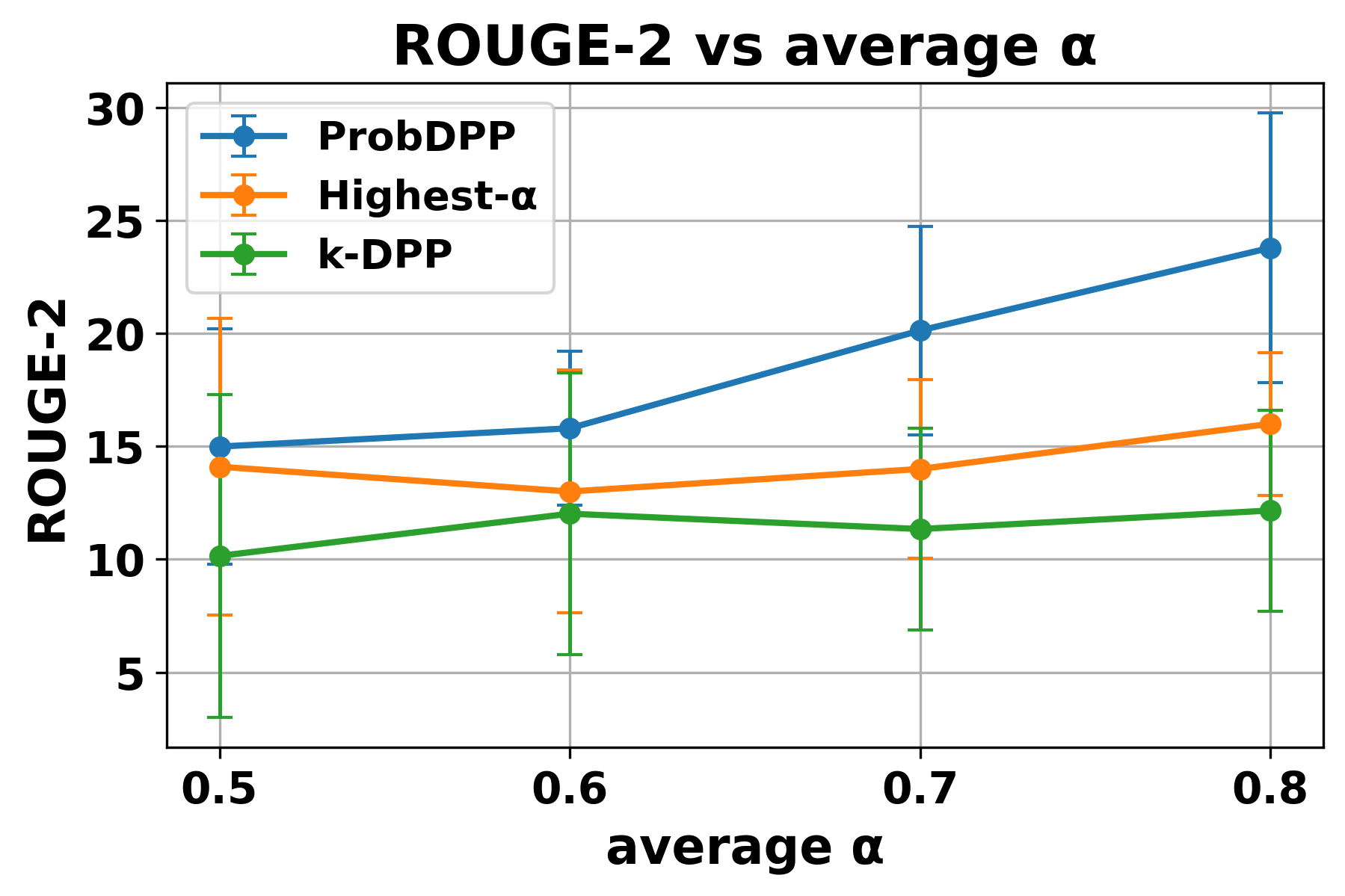}
    \caption{}
    \label{fig:three}
  \end{subfigure}\hfill
  \begin{subfigure}[t]{0.48\columnwidth}
    \centering
    \includegraphics[width=\linewidth]{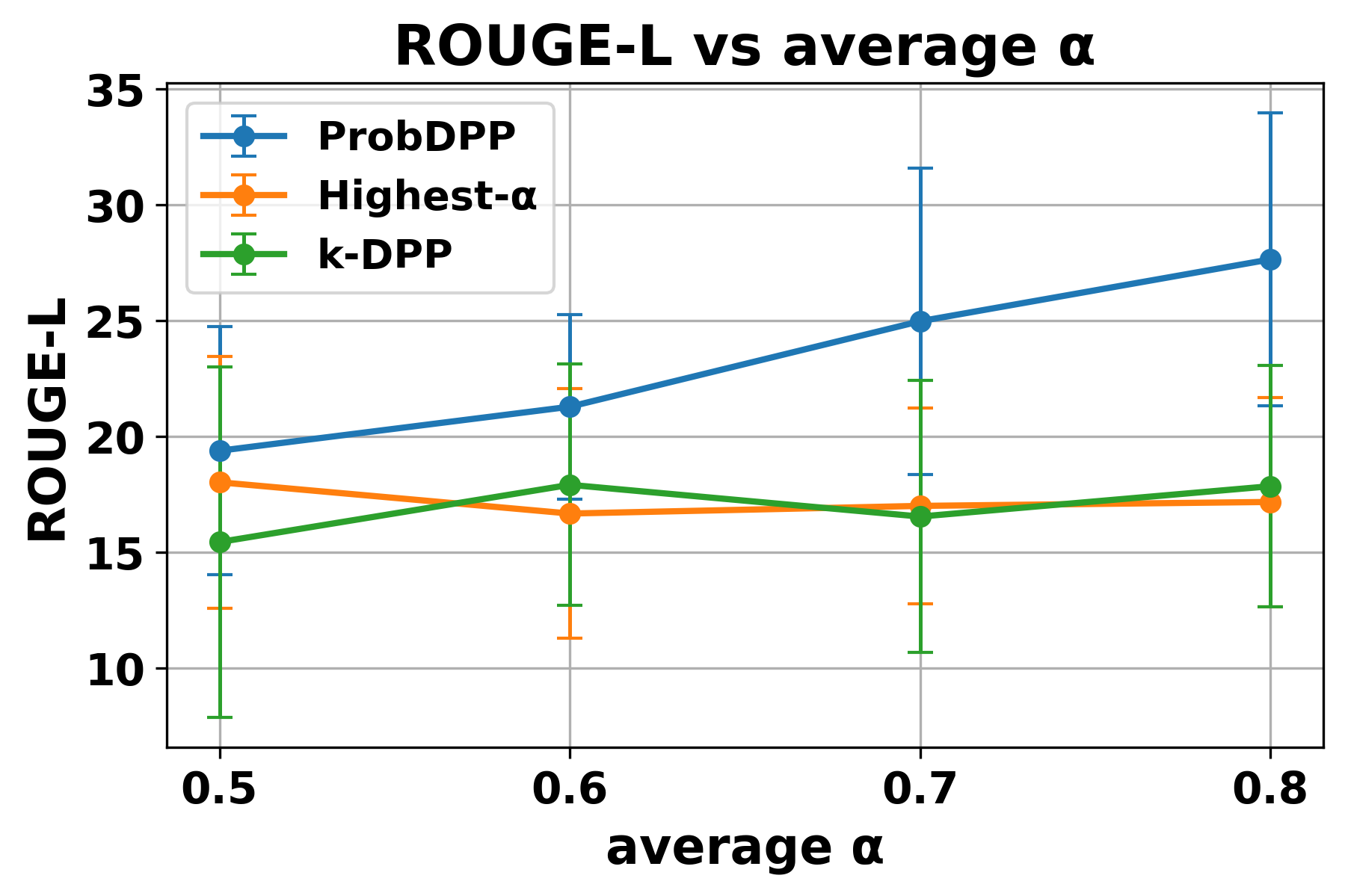}
    \caption{}
    \label{fig:four}
  \end{subfigure}

  \setlength{\abovecaptionskip}{2pt}
  \setlength{\belowcaptionskip}{-4pt}
  \caption{\small Robustness to stochastic source availability. Performance vs.\ survival average probability $\alpha$ with $N=10$ and $K=5$. Error bars show mean $\pm$ std.}
  \label{fig:robustness_alpha_grid}
\end{figure}

\begin{figure}[t]
  \centering

  \begin{subfigure}[t]{0.48\columnwidth}
    \centering
    \includegraphics[width=\linewidth]{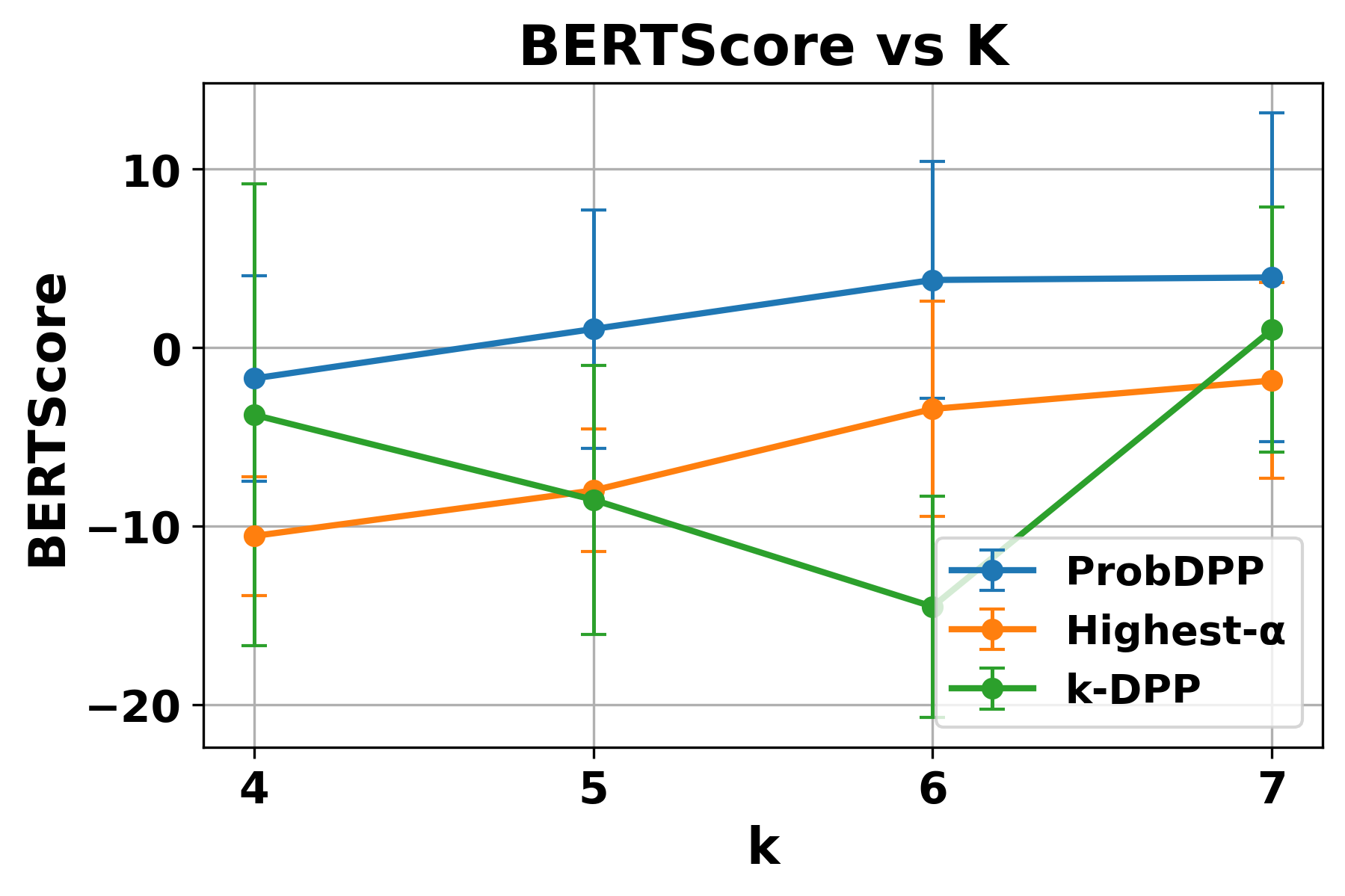}
    \caption{}
    \label{fig:onek}
  \end{subfigure}\hfill
  \begin{subfigure}[t]{0.48\columnwidth}
    \centering
    \includegraphics[width=\linewidth]{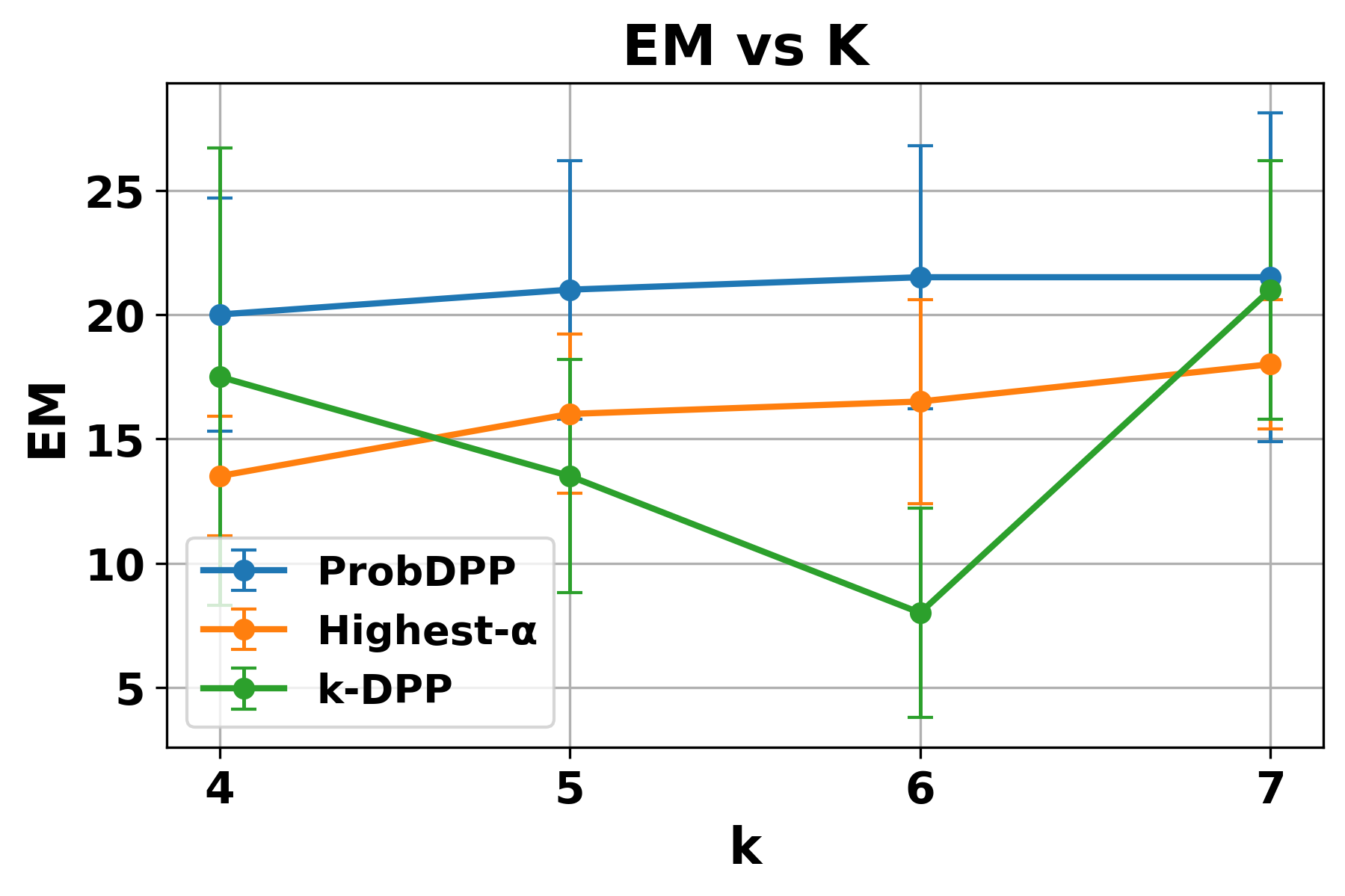}
    \caption{}
    \label{fig:twok}
  \end{subfigure}

  \vspace{0.1em}

  \begin{subfigure}[t]{0.48\columnwidth}
    \centering
    \includegraphics[width=\linewidth]{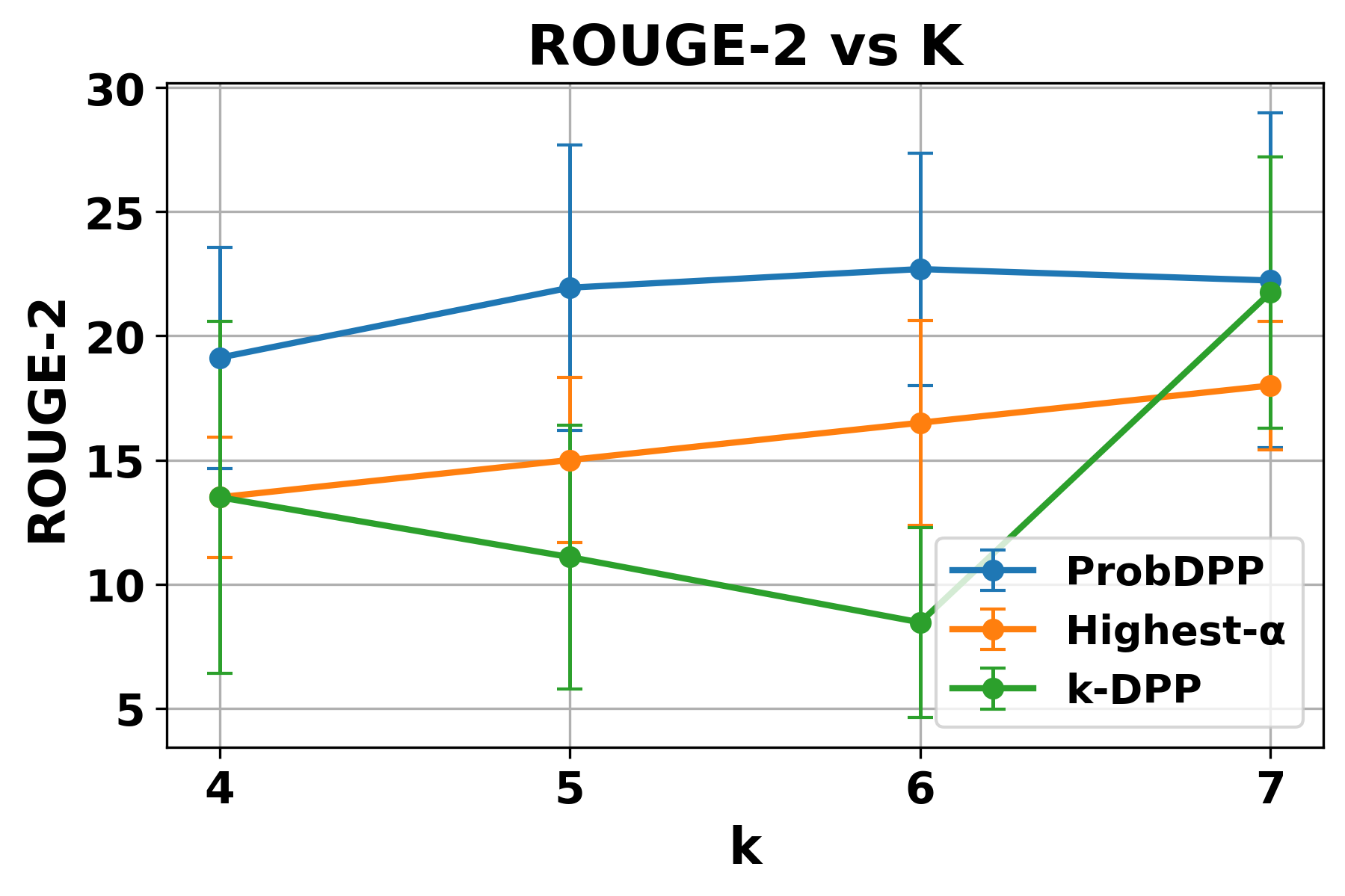}
    \caption{}
    \label{fig:threek}
  \end{subfigure}\hfill
  \begin{subfigure}[t]{0.48\columnwidth}
    \centering
    \includegraphics[width=\linewidth]{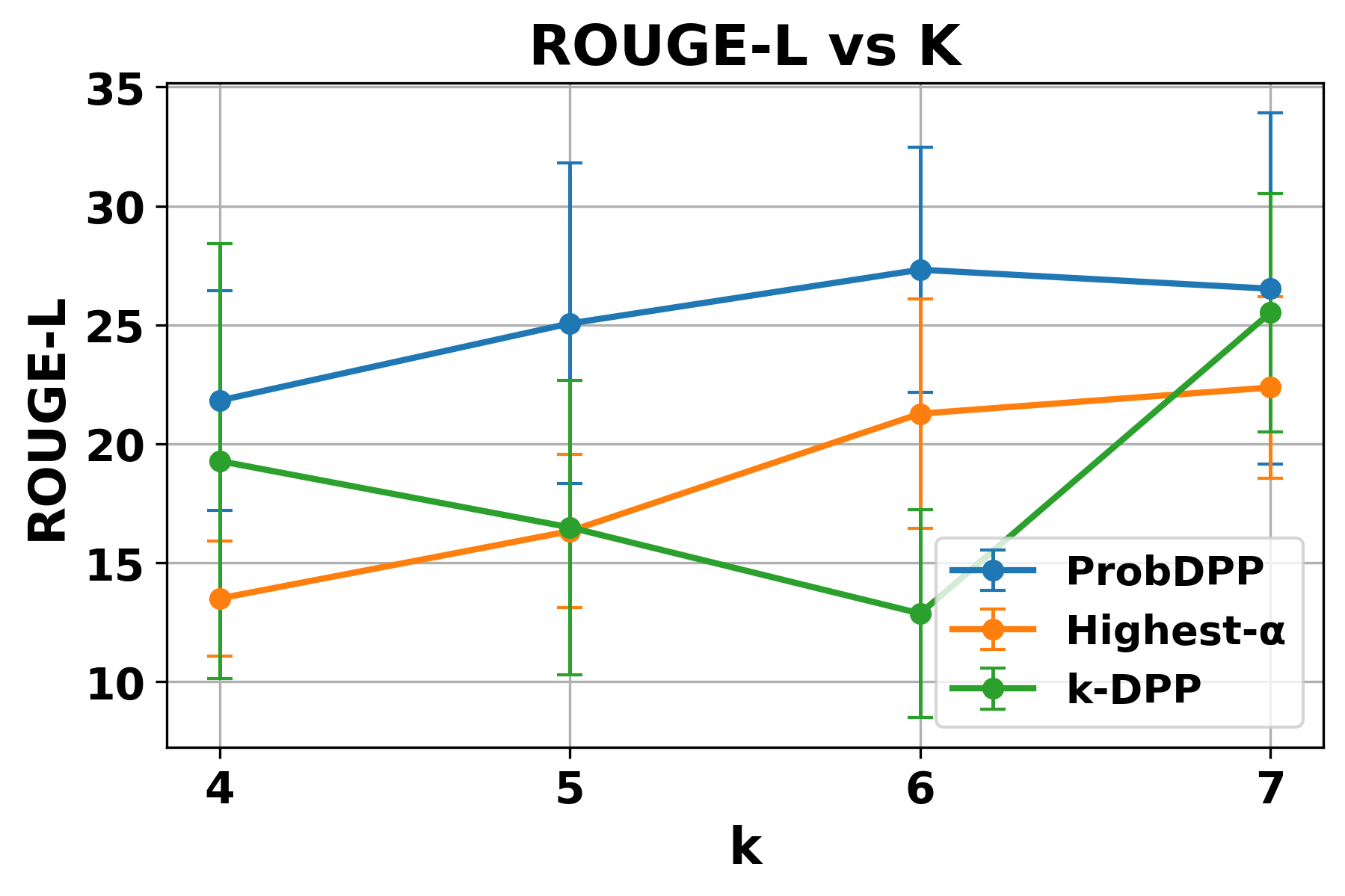}
    \caption{}
    \label{fig:fourk}
  \end{subfigure}

  \setlength{\abovecaptionskip}{2pt}
  \setlength{\belowcaptionskip}{-4pt}
  \caption{\small Effect of selection budget under stochastic availability. Performance vs.\ selection budget $K$ with $N=10$ candidate sources. Error bars show mean $\pm$ std.}
  \label{fig:budget_k_grid}
\end{figure}

\section{Discussion and Limitations}
Our analysis assumes independent data dropout, capturing random unavailability but not structured failures such as correlated or adversarial outages; extending ProbDPP to these settings is an important future direction to pursue. We further adopt a fixed similarity kernel and binary availability feedback, whereas in practice, reliability may be query-dependent and failures graded. Also, other distortion models (beyond dropout) can be considered. Our online setting assumes semi-bandit per-item feedback, while some systems observe only aggregate output quality, motivating bandit extensions for end-to-end feedback. Finally, the parameter $\varepsilon$ governs a stability–sensitivity trade-off; although a fixed value performs well empirically, adaptive selection could further improve robustness.

\section{Conclusion}
We studied budgeted unit selection for LLM pipelines enforcing diversity under stochastic data access, where selected passages may drop due to communication, storage, or RAG failures. We showed that naïve reliability-aware extensions of DPP objectives are ill-posed under Bernoulli dropouts, collapsing to $-\infty$ whenever failures occur. To address this, we proposed \textsc{ProbDPP}, a minimally regularized formulation that restores well-posedness and yields an exact decomposition into a log-det diversity term and an additive reliability reward, enabling principled optimization under probabilistic access. For unknown drop probabilities, we formulated online selection as a combinatorial semi-bandit and introduced a KL-UCB-style algorithm with matching regret bounds. Incorporating reliability directly into the diversity objective—rather than as a heuristic—consistently improves performance across stochastic-context and multi-source settings, achieving a 13.5\% BERTScore gain on MeetingBank (36.9 vs. 32.5, outperforming LLMLingua2) and a 9.4\% Token-F1 gain on HotpotQA with $K=3$ (34.14 vs. 31.20, surpassing k-DPP). Our approach provides a practical foundation for LLM fine-tuning, prompt engineering, RAG systems, and token pruning under unreliable and bandwidth-constrained communication.




\section*{Impact Statement}
This paper presents work whose goal is to advance the field of Machine Learning.
There are many potential societal consequences of our work, none which we feel must be specifically highlighted here.


\bibliography{example_paper}
\bibliographystyle{icml2026}

\newpage
\appendix
\onecolumn
\section{Appendix}

\subsection{Proof of Lemma \ref{lem:illposed}}
\label{app:illposed}

We show that the naïve expected log-determinant objective collapses under Bernoulli dropouts,
and that regularization restores well-posedness.

Let $L\subseteq[N]$ be a selected subset and define the unregularized kernel
\begin{align}
\boldsymbol{\Sigma}_{L,L}(\mathbf{z}_t) = M_L(\mathbf{z}_t)\, G_{L,L}\, M_L(\mathbf{z}_t),
\end{align}
where $z_{i,t}\sim\mathrm{Bernoulli}(\alpha_i)$ independently and $M_L(\mathbf{z}_t)=\mathrm{diag}(z_{i,t}:i\in L)$.

If any selected item $i\in L$ has $\alpha_i<1$, then
\begin{align}
\Pr(z_{i,t}=0) = 1-\alpha_i > 0,
\end{align}
and whenever $z_{i,t}=0$, the diagonal matrix $M_L(\mathbf{z}_t)$ has a zero entry.
Consequently, $\boldsymbol{\Sigma}_{L,L}(\mathbf{z}_t)$ is singular and
\begin{align}
\log\det \boldsymbol{\Sigma}_{L,L}(\mathbf{z}_t) = -\infty
\end{align}
with positive probability.
Therefore, these infinite terms contribute with a nonzero coefficient ($1-\alpha_i$) to the expected value, and we have \begin{align}
\mathbb{E}_{\mathbf{z}_t}[\log\det \boldsymbol{\Sigma}_{L,L}(\mathbf{z}_t)] = -\infty,
\end{align}
and the naïve objective is ill-posed under stochastic availability.

To resolve this issue, \textsc{ProbDPP} introduces the regularized kernel
\begin{align}
\boldsymbol{\Sigma}^{(\varepsilon)}_{L,L}(\mathbf{z}_t)
= (M_L(\mathbf{z}_t)+\varepsilon I)\, G_{L,L}\, (M_L(\mathbf{z}_t)+\varepsilon I),
\qquad \varepsilon>0.
\end{align}
Assuming $G_{L,L}\succ0$, Appendix~\ref{app:decomposition} shows that
\begin{align}
\mathbb{E}_{\mathbf{z}_t}[\log\det \boldsymbol{\Sigma}^{(\varepsilon)}_{L,L}(\mathbf{z}_t)]
=
\log\det G_{L,L}
+ \sum_{i\in L} r_i(\alpha_i,\varepsilon),
\end{align}
which is finite for all $\alpha_i\in[0,1]$.
Hence the regularized objective is well-defined and decomposes cleanly into
a geometric diversity term and an additive reliability reward.
\qed

\subsection{Proof of Lemma \ref{lem:decomp}}
\label{app:decomposition}

We derive a closed-form decomposition of the expected regularized diversity objective used in \textsc{ProbDPP}.
Fix a subset $L\subseteq[N]$ and assume that the Gram submatrix $G_{L,L}$ is positive definite.\footnote{
If $G_{L,L}$ is only positive semidefinite, the same results hold after a standard ridge $G\leftarrow G+\delta I$ with $\delta>0$.
}

The regularized kernel under stochastic availability is defined as
\begin{align}
\boldsymbol{\Sigma}^{(\varepsilon)}_{L,L}(\mathbf{z}_t)
&= W_L(\mathbf{z}_t)\, G_{L,L}\, W_L(\mathbf{z}_t), \\
W_L(\mathbf{z}_t)
&= M_L(\mathbf{z}_t) + \varepsilon I,
\end{align}
where $M_L(\mathbf{z}_t)=\mathrm{diag}(z_{i,t}:i\in L)$ and $z_{i,t}\sim\mathrm{Bernoulli}(\alpha_i)$ independently.

Since $G_{L,L}\succ 0$, it admits a Cholesky decomposition
\begin{align}
G_{L,L} = R_L R_L^\top,
\end{align}
with $R_L$ lower triangular and invertible.
Substituting into the kernel yields
\begin{align}
\boldsymbol{\Sigma}^{(\varepsilon)}_{L,L}(\mathbf{z}_t)
&= W_L(\mathbf{z}_t) R_L R_L^\top W_L(\mathbf{z}_t) \\
&= \big(W_L(\mathbf{z}_t) R_L\big)\big(W_L(\mathbf{z}_t) R_L\big)^\top.
\end{align}
Hence $\boldsymbol{\Sigma}^{(\varepsilon)}_{L,L}(\mathbf{z}_t)$ is positive definite for all $\mathbf{z}_t$ and $\varepsilon>0$.

Using the determinant identity $\det(AA^\top)=\det(A)^2$, we obtain
\begin{align}
\log\det \boldsymbol{\Sigma}^{(\varepsilon)}_{L,L}(\mathbf{z}_t)
&= 2\log\det\!\big(W_L(\mathbf{z}_t)R_L\big) \\
&= 2\log\det W_L(\mathbf{z}_t) + 2\log\det R_L \\
&= \log\det G_{L,L} + 2\sum_{i\in L}\log(z_{i,t}+\varepsilon),
\label{eq:logdet_decomp}
\end{align}
where the last equality uses $\det(R_L)^2=\det(G_{L,L})$ and the fact that $W_L(\mathbf{z}_t)$ is diagonal.

Taking expectation with respect to $\mathbf{z}_t$ gives
\begin{align}
\mathbb{E}_{\mathbf{z}_t}[\log\det \boldsymbol{\Sigma}^{(\varepsilon)}_{L,L}(\mathbf{z}_t)]
&= \log\det G_{L,L}
+ \sum_{i\in L} r_i(\alpha_i,\varepsilon),
\end{align}
where the per-item reliability reward is
\begin{align}
r_i(\alpha_i,\varepsilon)
&\triangleq
2\Big[
\alpha_i\log(1+\varepsilon)
+ (1-\alpha_i)\log\varepsilon
\Big].
\end{align}
This establishes the claimed decomposition.
\qed

\subsection{Proof of Theorem \ref{thm:upper}}
\label{app:proof_thm53}

The per-round reward is
\begin{align}
f(a) = g(a) + \langle a,\theta\rangle,
\end{align}
where $\theta_i=r_i(\alpha_i,\varepsilon)$ and $\alpha_i\in[0,1]$ is unknown.
Let $A^\star\in\arg\max_{a\in\mathcal A} f(a)$ and define the $Reg_T$
\begin{align}
Reg_T = \sum_{t=1}^T \big(f(A^\star)-f(A_t)\big).
\end{align}

From Appendix~\ref{app:decomposition},
\begin{align}
r_i(\alpha,\varepsilon)
&= 2\log\varepsilon
+ \beta_\varepsilon \alpha, \\
\beta_\varepsilon
&\triangleq
2\log\!\Big(\tfrac{1+\varepsilon}{\varepsilon}\Big) > 0,
\label{eq:affine_r}
\end{align}
so $r_i(\cdot,\varepsilon)$ is strictly increasing.

Let $U_i^\alpha(t)$ be the KL-UCB index for $\alpha_i$ and define
\begin{align}
\tilde\theta_i(t) = r_i(U_i^\alpha(t),\varepsilon).
\end{align}
Standard KL-UCB analysis ensures that with high probability,
\begin{align}
\theta_i \le \tilde\theta_i(t)
\quad \forall i,t.
\label{eq:optimism}
\end{align}

Let $\Delta(a)=f(A^\star)-f(a)$.
If $A_t$ is suboptimal, then
\begin{align}
\Delta(A_t)
&\le
\sum_{i:A_{t,i}=1}
\big(\tilde\theta_i(t-1)-\theta_i\big).
\end{align}
Hence there exists at least one selected item $i$ such that
\begin{align}
\tilde\theta_i(t-1)-\theta_i \ge \Delta_i/K,
\end{align}
where $\Delta_i$ is the minimal gap of actions containing item $i$.

Define the bad optimism indicator
\begin{align}
B_{i,t}
=
\mathbb{I}\Big\{
A_{t,i}=1,\
\tilde\theta_i(t-1)-\theta_i \ge \Delta_i/K
\Big\}.
\end{align}
Then
\begin{align}
\mathbb{I}\{A_t\neq A^\star\}
\le
\sum_{i=1}^N B_{i,t}.
\label{eq:subopt_count}
\end{align}

By \eqref{eq:affine_r}, the event
$\tilde\theta_i(t-1)-\theta_i \ge \Delta_i/K$
is equivalent to
\begin{align}
U_i^\alpha(t-1) \ge \alpha_i^+
\triangleq
\min\!\Big\{
1,\
\alpha_i + \tfrac{\Delta_i}{K\beta_\varepsilon}
\Big\}.
\end{align}
KL-UCB counting arguments yield
\begin{align}
\mathbb{E}\!\left[\sum_{t=1}^T B_{i,t}\right]
\le
\frac{\log T + c\log\log T}{\mathrm{kl}(\alpha_i\|\alpha_i^+)}
+ O(1).
\end{align}

Let
\begin{align}
\Delta_{\max} = \max_{a\in\mathcal A}\big(f(A^\star)-f(a)\big),
\end{align}
which is finite since $g(a)$ and $r_i(\cdot,\varepsilon)$ are bounded.
Using \eqref{eq:subopt_count},
\begin{align}
\mathbb{E}[Reg_T]
&\le
\Delta_{\max}\sum_{i=1}^N
\mathbb{E}\!\left[\sum_{t=1}^T B_{i,t}\right] \\
&\le
\Delta_{\max}\sum_{i=1}^N
\frac{\log T + c\log\log T}{\mathrm{kl}(\alpha_i\|\alpha_i^+)}
+ O(1).
\end{align}
This proves the claimed logarithmic regret bound.
\qed

\subsection{Lower Bound for ProbDPP}
\label{app:proof_lower}

We use the restricted instance from Section~5.4 with $K=1$ and $g(a)$ constant over singleton sets, so each action selects a single item $i\in[N]$ and the learner observes Bernoulli feedback $z_{i,t}\sim Bernoulli(\alpha_i)$.
Moreover, the reliability reward in Eq.~(53) is affine in $\alpha_i$:
\[
r_i(\alpha_i,\varepsilon)
= 2\log\varepsilon \;+\; \beta_\varepsilon\,\alpha_i,
\qquad
\beta_\varepsilon \triangleq 2\log\!\Big(\frac{1+\varepsilon}{\varepsilon}\Big).
\]
Hence maximizing the expected objective is equivalent (up to an additive constant and scaling by $\beta_\varepsilon$) to maximizing $\alpha_i$.
Let $i^\star\in\arg\max_i \alpha_i$ and define the (objective) gap
$\Delta_i \triangleq \beta_\varepsilon(\alpha_{i^\star}-\alpha_i)$.

For any consistent policy $\pi$ on the unstructured class of Bernoulli bandits, Theorem~16.2 in \citet{lattimore2020bandit}
(Burnetas--Katehakis / Lai--Robbins form) gives, for each suboptimal arm $i\neq i^\star$,

\begin{align}
\liminf_{T\to\infty}
\frac{\mathbb{E}_\pi\!\left[n_i(T)\right]}{\log T}
&\ge
\frac{1}{d_i},
\\
d_i
&\triangleq
\mathrm{kl}\!\big(\alpha_i \,\|\, \alpha_{i^\star}\big).
\end{align}

where $\mathcal{B}$ denotes the Bernoulli family. By Table~16.1 in \citet{lattimore2020bandit} (Bernoulli specialization),
$d_i = \mathrm{kl}(\alpha_i\|\alpha_{i^\star})$, where
$\mathrm{kl}(p\|q)=p\log\frac{p}{q}+(1-p)\log\frac{1-p}{1-q}$.
Therefore,
\[
\liminf_{T\to\infty}\frac{Reg_T}{\log T}
=
\liminf_{T\to\infty}\sum_{i\neq i^\star}\Delta_i\frac{\mathbb{E}_\pi[n_i(T)]}{\log T}
\;\ge\;
\sum_{i\neq i^\star}\frac{\Delta_i}{\mathrm{kl}(\alpha_i\|\alpha_{i^\star})}.
\]
This is the standard instance-dependent lower bound with information denominators, specialized to the Bernoulli-reliability setting induced by the $K=1$ reduction.


\end{document}